\newcommand{\vct}{\mathbf }
\setlist{nosep,after=\vspace{0.0\baselineskip},leftmargin=12pt}
\newtheorem{theorem}{Theorem}
\newtheorem{lemma}[theorem]{Lemma}
\newtheorem{prop}[theorem]{Proposition}
\newtheorem{cor}[theorem]{Corollary}
\title{Does mitigating ML's impact disparity \\
require treatment disparity?
}
\author{Zachary C. Lipton$^1$, Alexandra Chouldechova$^1$,
Julian McAuley$^2$\\
$^1$Carnegie Mellon University\\
$^2$University of California, San Diego\\
 \href{mailto:zlipton@cmu.edu}{\nolinkurl{zlipton@cmu.edu}},
   \href{mailto:acould@cmu.edu}{\nolinkurl{achould@cmu.edu}},
    \href{mailto:jmcauley@cs.ucsd.edu}{\nolinkurl{jmcauley@cs.ucsd.edu}}
}
\renewcommand{\P}{\mathbb{P}}
\newcommand{\E}{\mathbb{E}}
\DeclareMathOperator*{\argmin}
\begin{document}
\maketitle

\begin{abstract}
Following precedent in employment discrimination law,
two notions of disparity are widely-discussed 
in papers on fairness and ML. 
Algorithms exhibit \emph{treatment disparity}
if they formally treat 
members of protected subgroups differently;
algorithms exhibit \emph{impact disparity}
when outcomes differ across subgroups (even unintentionally).
Naturally, we can achieve impact parity 
through purposeful treatment disparity.
One line of papers aims to reconcile the two parities
proposing \emph{disparate learning processes} (DLPs).
Here, the sensitive feature is used during training 
but a \emph{group-blind} classifier is produced.
In this paper, we show that:
(i) when sensitive and (nominally) nonsensitive features are correlated,
DLPs will indirectly implement treatment disparity,
undermining the policy desiderata 
they are designed to address;
(ii) when group membership is \emph{partly} revealed by other features, DLPs induce within-class discrimination;
and (iii) in general, DLPs provide suboptimal trade-offs between accuracy and impact parity.
Experimental results on several real-world datasets 
highlight the practical consequences 
of applying DLPs.

\end{abstract}

\section{Introduction}
\label{section:intro}
Effective decision-making 
requires choosing among options 
given the available information.
That much is unavoidable, 
unless we wish to make trivial decisions.
In selection processes, 
such as hiring, university admissions, and loan approval, 
the options are people;
the available features include 
(but are rarely limited to) 
direct evidence of qualifications;
and decisions impact lives. 

Laws in many countries restrict 
the ways in which certain decisions can be made.
For example,
Title VII of the US Civil Rights Act \cite{CivilRights1964},
forbids employment decisions that discriminate 
on the basis of certain \emph{protected characteristics}.
% \emph{race, color, religion, sex,} and \emph{national origin}.
Interpretation of this law 
has led to two notions of discrimination: 
\emph{disparate treatment} and \emph{disparate impact}.
\textit{Disparate treatment} 
addresses intentional discrimination, 
including 
(i) decisions explicitly based on protected characteristics; 
and (ii) intentional discrimination via proxy variables
(e.g~literacy tests for voting eligibility).
% to disenfranchise racial minorities).
%
\textit{Disparate impact}
addresses facially neutral practices
that might nevertheless
have an ``unjustified adverse impact 
on members of a protected class'' \cite{CivilRights1964}.  
One might hope that detecting unjustified impact were as simple as detecting unequal outcomes.
% (for some appropriate definition of outcome),
% or that such disparity could only arise through intentional discrimination.
However, absent intentional discrimination,
unequal outcomes can emerge due to correlations 
between protected and unprotected characteristics.
Complicating matters, unequal outcomes 
may not always signal unlawful discrimination \cite{AsianNYT}.

Recently, owing to the increased use
of machine learning (ML) 
to assist in consequential decisions,
the topic of quantifying and mitigating
ML-based discrimination 
has attracted interest 
% among both practitioners and academics 
in both policy and ML.
However, while the existing legal doctrine 
offers qualitative ideas,
%expressed in prose,
intervention in an ML-based system 
requires more concrete 
%mathematical 
formalism.
%
% While disparate treatment and disparate impact 
% are concepts rooted in United States anti-discrimination law,
% the topics have
%
%
%
%Loosely 
Inspired by the relevant legal concepts, technical papers have proposed several criteria 
to quantify discrimination. 
One criterion requires that 
the fraction given a positive decision be equal
across different groups.
Another criterion states that 
a classifier should be blind to the protected characteristic. Within the technical literature, these criteria are commonly referred to as \emph{disparate impact} and \emph{disparate treatment}, respectively.

In this paper, we call these technical criteria \emph{impact parity} and \emph{treatment parity} 
to distinguish them from their legal antecedents.  
The distinction between 
technical and legal terminology 
is important to maintain. 
While impact and treatment parity 
are inspired by legal concepts, 
technical approaches that achieve these criteria 
may fail to satisfy 
the underlying legal and ethical desiderata. 
% Confusingly, \emph{disparate impact theory} and \emph{impact parity} are not the same.
% IP \emph{requires} having identical proportions of outcomes whereas in DI, group differences in outcome statistics are the first part 
% of a multi-step process.
% Likewise, \emph{disparate treatment} (which includes explicit proxy discrimination) and \emph{treatment parity} (which doesn't) are not the same.

% \textbf{In this paper}, we will call these technical criteria \emph{impact parity} (IP) and \emph{treatment parity} (TP), to distinguish them from their legal antecedents. 
% Confusingly, \emph{disparate impact theory} and \emph{impact parity} are not the same.
% IP \emph{requires} having identical proportions of outcomes whereas in DI, group differences in outcome statistics are the first part 
% of a multi-step process.
% Likewise, \emph{disparate treatment} (which includes explicit proxy discrimination) and \emph{treatment parity} (which doesn't) are not the same.
% Owing to this carelessness with language, 
% technical solutions motivated 
% by these definitions might fail to impact 
% the underlying legal and ethical desiderata 
% that they are designed to address.

We demonstrate one such disconnect
through DLPs,
a class of algorithms designed 
to simultaneously satisfy
treatment- and impact-parity criteria \cite{pedreshi2008discrimination,kamishima2011fairness,zafar2017fairness}.   
DLPs operate according to the following principle:
\emph{The protected characteristic may be used during training, but is not available to the model at prediction time.}
In the earliest such approach
the protected characteristic is used
to winnow the set of acceptable rules 
from an expert system \cite{pedreshi2008discrimination}.
Others incorporate the protected characteristic 
as either a regularizer, 
a constraint, 
or to preprocess the training data   \citep{kamiran2009classifying,kamiran2010discrimination,zafar2017fairness}.

These approaches are grounded in the premise 
that DLPs are acceptable in cases 
where using a protected characteristic 
as a direct input to the model
would constitute \emph{disparate treatment}
and thus be impermissible.  Indeed, DLPs in some sense operationalize a form of prospective fair ``test design'' that is well aligned with the ruling in \textit{Ricci v. DeStefano} \cite{kim2017auditing}.  
% We call this premise into question 
% on the following grounds:
In this paper we investigate the utility of DLPs as a technical solution
and present the following cautionary insights:
\begin{enumerate}
\item When protected characteristics 
are redundantly encoded in the other features, 
sufficiently powerful DLPs 
can (indirectly) implement any form of treatment disparity.
\item When protected characteristics are partially encoded
DLPs induce within-class discrimination 
based on irrelevant features,
and can harm some members of the protected group. 
\item DLPs provide a suboptimal trade-off 
between accuracy and impact parity. 
\item While disparate treatment is by definition illegal, 
the status of treatment disparity is debated \citep{kim2017data}.
% The optimal way to trade off the two 
% is to apply per-group thresholds, 
% effecting treatment disparity. 
\end{enumerate}

\begin{figure*}
\begin{center}
\includegraphics[width=.8\linewidth]{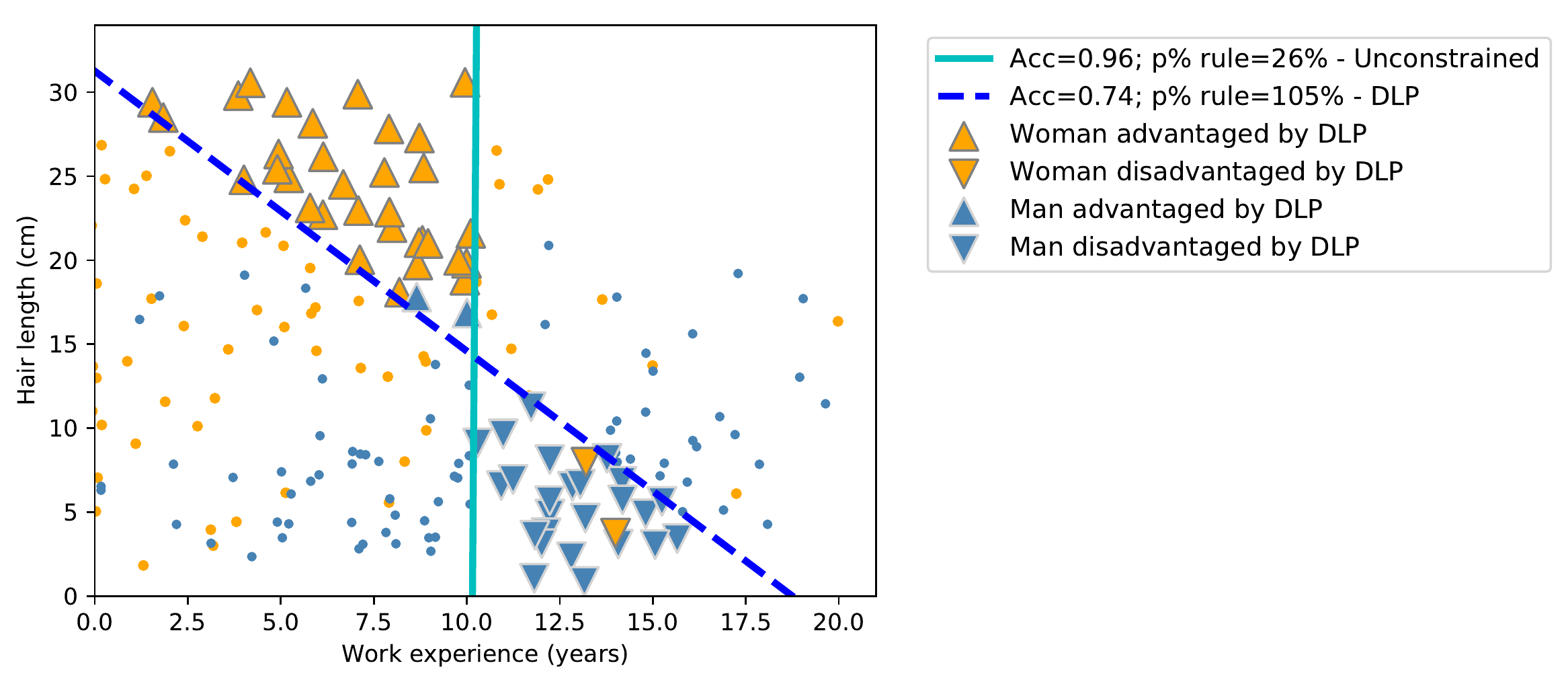}
\end{center}
\caption{Demonstration of a DLP's undesirable side effects on a simple example of hiring data (see \S \ref{sec:synthetic}).  
An unconstrained classifier (vertical line)
hires candidates based on work experience,
yielding higher hiring rates for men than for women.  
A DLP (dashed diagonal) achieves near-parity 
by differentiating based on an irrelevant attribute (hair length).  
The DLP \emph{hurts} some short-haired women,
flipping their decisions to reject, 
and helps some long-haired men.
}
\label{fig:intragroup-disparity}
\end{figure*}

%JULIAN: Probably gotta go?
% The capacity of DLPs to carry out treatment disparity \emph{indirectly}
% casts doubt on whether, under the law,
% they would be viewed differently 
% from approaches that \emph{directly} apply treatment disparity.
% A recent California Law Review paper 
% \citep{grimmelmann2017incomprehensible} 
% supports this view, illustrating their
% arguments in the ``law-school hypothetical state" of Zootopia,
% where the protected groups are species.
% \begin{displayquote}
% In our view, Title VII does not permit an employer
% to do indirectly what it could not do directly. An employer that explicitly selects
% applicants on the basis of species violates Title VII under a disparate treatment theory, 
% regardless of whether species is correlated with job performance, 
% and regardless of whether it bears animus against particular species. It is the selection
% “on the basis of” species that is the problem. 
% An employer that uses home address to infer applicants’ species and then selects applicants from particular species
% does exactly the same, only in two steps rather than one. This too is a form of disparate treatment.
% \end{displayquote}

% \section{Algorithmic Decision-Making}
% \label{section:background}
% \input{sections/background.tex}

\section{Disparate Learning Processes}
% \section{Technical preliminaries}
\label{section:fairness-algos}
To begin our formal description of the prior work,
we'll introduce some  notation.  
A dataset consists of $n$ \emph{examples}, 
or \emph{data points}
$\{\vct x_i \in \mathcal{X}, y_i \in \mathcal{Y} \}$,
each consisting of a feature vector $\vct x_i$ and a label $y_i$.  A supervised learning algorithm $f:\mathcal{X}^n \times \mathcal{Y}^n \rightarrow 
(\mathcal{X} \rightarrow [0,1])$ is a mapping from datasets to models.  The learning algorithm
produces a model 
$\hat{y}: \mathcal{X} \rightarrow \mathcal{Y}$,
which given a feature vector $\vct x_i$,
predicts the corresponding output $y_i$.
% The supervised learning algorithm $f$ 
% is itself a function,
% mapping from datasets to models  
% $f:(\mathcal{X}^n, \mathcal{Y}^n) \rightarrow 
% (\mathcal{X} \rightarrow [0,1])$.
In this discussion
we focus on binary classification
%the setting in which the label $y$ 
% takes values from the set 
($\mathcal{Y} = \{0,1\}$).

We consider probabilistic classifiers
which produce estimates $\hat{p}(\mathbf{x})$ of the conditional probability 
$\P(y=1 \mid \vct x)$
of the label given a feature vector $\mathbf{x}$. 
To make a prediction $\hat{y}(\vct x) \in \mathcal{Y}$
given an estimated probability $\hat{p}(\vct x)$ a threshold rule is used: $\hat{y}_i = 1 \text{ iff } \hat{p}_i > t$.
The optimal choice of the threshold $t$
depends on the performance metric being optimized.  In our theoretical analysis, we consider optimizing the \textit{immediate utility}
%, as introduced in 
\citep{corbett2017algorithmic}, of which classification accuracy (expected $0-1$ loss) is a special case.   We will define this metric more precisely in the next section.

In formal descriptions of discrimination-aware ML,
a dataset  possesses a protected feature $z_i \in \mathcal{Z}$,
making each example a three-tuple $(\vct x_i, y_i, z_i)$.
The protected characteristic may be real-valued, like age, 
or categorical, like race or gender.  The goal of many methods in discrimination-aware ML is not only to maximize accuracy, but also to ensure some form of impact parity.
Following related work, 
we consider binary protected features 
that divide the set of examples 
into two groups $a$ and $b$.  Our analysis extends directly to settings with more than two groups.

Of the various measures of impact disparity,
%that have been proposed, 
the two that are the most relevant here are the Calders-Verwer gap and the p-\% rule. At a given threshold $t$, let
\mbox{$q_z = \frac{1}{n_z}\sum_{i : z_i=z}  \mathbbm{1}(\hat p_i > t)$}, where $n_z = \sum_{i}^{n} \mathbbm{1}(z_i = z)$. 
The \textbf{Calders-Verwer (CV) gap},
$q_a - q_b$,
is the difference between the proportions 
assigned to the positive class 
in the advantaged group $a$ and the disadvantaged  group $b$ \citep{kamishima2011fairness}.
The p-\% rule 
is a related metric\cite{zafar2017fairness}. 
Classifiers satisfy the \textbf{p-\% rule} 
if $q_b / q_a \ge p/100$.  
% This metric is motivated by 
% fair employment practices \citep{biddle2006adverse}, 
% in which it is stated that cases where the 
% ratio $q_b/q_a$ is below $0.8$ are problematic.  

Many papers in discrimination-aware ML 
propose to optimize accuracy (or some other risk) 
subject to constraints on the resulting 
level of impact parity 
as assessed by some metric \citep{pedreshi2008discrimination,kamiran2010discrimination,dwork2017decoupled,bechavod2017learning,hardt2016equality,ritov2017conditional}.  
% Papers proposing DLPs \citep{pedreshi2008discrimination,kamiran2009classifying, zafar2017fairness} 
% take as a premise 
Use of DLPs presupposes
that using the protected feature $z$ 
as a model input is impermissible in this effort. 
% as it amounts to \emph{treatment disparity}.
Discarding protected features, however, 
does not guarantee impact parity \cite{dwork2012fairness}.  
% Instead of discarding the protected features, 
DLPs incorporate $z$ in the learning algorithm, 
but without making it an input to the classifier.  
Formally, a DLP is a mapping: 
$\mathcal{X}^n \times \mathcal{Y}^n \times \mathcal{Z}^n \rightarrow (\mathcal{X} \rightarrow \mathcal{Y})$.
% AS $z$ isn't an input 
% to the classifier model, 
By definition, DLPs achieve treatment parity.  
However, satisfying \emph{treatment parity} in this fashion 
may still violate \emph{disparate treatment}.
% As we show next (\S \ref{section:theory}), 
% Under some circumstances, 
% DLPs can indirectly realize any function 
% achievable through treatment disparity.  

\paragraph{Alternative approaches.} 
Researchers have proposed a number of other techniques for reconciling accuracy and impact parity. 
One approach consists of preprocessing
the training data 
to reduce the dependence between 
the resulting model predictions 
and the sensitive attribute \citep{kamiran2009classifying,kamiran2012data,
feldman2015certifying,adler2016auditing,
johndrow2017algorithm}. 
These methods differ in terms of 
which variables they affect
and the degree of independence achieved.  
\cite{kamiran2009classifying} 
proposed flipping negative labels 
of training examples form the protected group.
\cite{zemel2013learning} proposed 
learning representations (cluster assignments)
so that group membership 
cannot be inferred from cluster membership. 
\cite{feldman2015certifying} 
and \cite{johndrow2017algorithm} 
also construct representations
designed to be marginally independent from $Z$.  
% \citet{johndrow2017algorithm} 
% demonstrates 
% how to construct transformations 
% to ensure that the derived features 
% are jointly independent of $Z$, 
% and show that this produces distributional parity 
% of the resulting fitted model. 
% A second common approach 
% is to modify existing classification methods 
% either through post-hoc corrections 
% or in the training stage 
% to constrain the level of impact parity 
% in the resulting model. 
% \citet{kamishima2011fairness, goh2016satisfying, calders2010three,kamiran2010discrimination} 
% consider modifications to methods such as SVM, 
% logistic regression, Naive Bayes, and decision trees.  
% \citet{agarwal2017reductions} 
% show how impact parity constraints 
% can be framed as a cost-sensitive classification problem. 

\section{Theoretical Analysis}
\label{section:theory}
We present a set of simple theoretical results 
that demonstrate the optimality of treatment disparity, 
and highlight some properties of DLPs.  
% Our optimality results are all derived in the population 
% or \emph{infinite data} setting, 
% and assume knowledge 
% of the true conditional probability function $p_{Y|X, Z}(\vct x,z) \equiv \P(Y = 1 \mid X = \vct x, Z = z)$. 
% The main results of this section can be summarized as follows:
We summarize our results as follows:
% \subsection{Optimality of disparate treatment under oracle model}
\begin{enumerate}
\item Direct treatment disparity on the basis of $z$ 
is the optimal strategy 
for maximizing classification accuracy\footnote{Our results are all presented in terms of a more general performance metric, of which classification accuracy is a special case.}
subject to CV and $p$-\% constraints.  
\item When $X$ fully encodes $Z$, a sufficiently powerful DLP is equivalent to treatment disparity.
\end{enumerate}
In Section \ref{section:empirical}, 
we empirically demonstrate a related point:
\begin{enumerate}[resume]
 \item When $X$ only partially encodes $Z$,
a DLP may be suboptimal and can induce intra-group disparity
on the basis of otherwise irrelevant features correlated with $Z$.
\end{enumerate}

% \vspace{5px}
\paragraph{Treatment disparity is optimal}
Absent impact parity constraints, the Bayes-optimal decision rule for minimizing expected $0-1$ loss (i.e., maximizing accuracy) is given by 
%
%  \[
%  d_{\mathrm{uncon}}^*(\vct x,z) = \begin{cases}
%  	1 & p_{Y|X, Z}(\vct x,z) \ge 0.5 \\
%     0 & \text{otherwise}
%  \end{cases}.  
%  \]
$d_{\mathrm{uncon}}^*(\vct x,z) = \delta( p_{Y|X, Z}(\vct x,z) \ge 0.5)$, where $\delta()$ is an indicator function.
% 
% Then to maximize accuracy, 
% without any consideration of disparate impact,
% the Bayes-optimal predictor
% outputs $\hat{y}_i = 1$ for each example $i$ such that $\hat{p}^*(x_i) > .5$, and outputs $\hat{y}_i=0$ otherwise.
% In this section,

We now show that the optimal decision rules 
in the CV and $p$-\% constrained problems 
have a similar form.  
The optimal decision rule 
will again be based on thresholding $p_{Y|X, Z}(\vct x,z)$, 
but at \textit{group-specific thresholds}. 
These rules can be thought of as operationalizing 
the following mechanism: 
Suppose that we start with the classifications 
of the unconstrained rule $d_{\mathrm{uncon}}^*(\vct x,z)$, 
and this results in a CV gap of $q_a - q_b  > \gamma$.  
To reduce the CV gap to $\gamma$ 
we have two mechanisms: We can
(i)  flip predictions from $0$ to $1$ in group $b$, and (ii) we can flip predictions from $1$ to $0$ in group $a$. 
The optimal strategy is to perform these flips on group $b$ cases that have the highest value of $p_{Y|X, Z}(\vct x,z)$ 
and group $a$ cases that have the lowest value of $p_{Y|X, Z}(\vct x,z)$.

The results in this section adapt the work of \cite{corbett2017algorithmic}, 
who establish optimal decision rules $d$ 
under 
% different kinds of 
% fairness-related 
% constraints.  
exact parity.
In that work, the authors characterize 
the optimal decision rule $d = d(\vct x, z)$ 
that maximizes the \textit{immediate utility} 
$u(d, c) = \E[Yd(X,Z) - cd(X,Z)]$ for $(0 < c < 1)$,
under different exact parity criteria.  
We begin with a lemma showing 
that expected classification accuracy 
has the functional form 
of an immediate utility function.

\begin{lemma} \label{lemma:accuracy}
Optimizing classification accuracy is equivalent 
to optimizing immediate utility with $c = 0.5$. 
\end{lemma}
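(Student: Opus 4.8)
The plan is to expand the $0$--$1$ loss agreement indicator pointwise as a bilinear expression in the (binary) decision $d$ and the (binary) label $Y$, take expectations over the data distribution, and observe that the resulting quantity equals $u(d,0.5)$ up to an affine transformation whose coefficients do not depend on the decision rule $d$.

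First I would write classification accuracy as $\E[\mathbbm{1}(d(\vct X,Z) = Y)]$, where $d(\vct X,Z)\in\{0,1\}$ is the classifier's decision. Since both $d$ and $Y$ take values in $\{0,1\}$, the agreement indicator factors as $\mathbbm{1}(d = Y) = dY + (1-d)(1-Y)$, which expands to $\mathbbm{1}(d=Y) = 1 - Y - d + 2dY$. Taking expectations gives
\[
\text{accuracy}(d) \;=\; 1 - \E[Y] - \E[d] + 2\,\E[Yd] \;=\; 1 - \E[Y] + 2\Bigl(\E[Yd] - \tfrac{1}{2}\E[d]\Bigr) \;=\; 1 - \E[Y] + 2\,u(d,0.5).
\]
Next I would note that $\E[Y]$ is a property of the data distribution alone, hence constant across all decision rules, and that the factor $2$ is a fixed positive constant. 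Therefore a rule $d$ maximizes accuracy if and only if it maximizes $u(\cdot,0.5)$, and the same equivalence holds for the maximizers subject to any common family of constraints on $d$ (in particular the CV and $p$-\% constraints considered in the sequel). This is exactly the sense of ``equivalent'' needed to import the characterizations of \cite{corbett2017algorithmic}, which are stated for immediate utility, into our accuracy-based setting.

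I do not anticipate a substantive obstacle: the only points requiring care are (i) that the decision is genuinely $\{0,1\}$-valued, so the factorization of the agreement indicator is valid (for randomized rules one simply takes the additional expectation over the rule's internal randomness and the identity is unchanged), and (ii) phrasing the conclusion at the level of (constrained) maximizers rather than asserting that the two objectives are literally equal. Both are immediate once the identity $\text{accuracy}(d) = 1 - \E[Y] + 2\,u(d,0.5)$ is established.
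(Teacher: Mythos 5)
Your proposal is correct and follows essentially the same route as the paper's proof: both expand the agreement indicator $\E[Yd + (1-Y)(1-d)]$ into $2\E[Yd] - \E[d]$ plus terms independent of $d$, and identify the $d$-dependent part with $2\,u(d,0.5)$. (Incidentally, your sign on $\E[Y]$ is the correct one; the paper's displayed identity has a harmless sign slip there, which affects nothing since that term does not depend on $d$.)
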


\begin{proof}
The expected accuracy of a binary decision rule $d(X)$ 
can be written as $\E[Yd(X) + (1-Y)(1-d(X))]$.  
Expanding and rearranging this expression gives
% \small
$$
\E[Yd(X) + (1-Y)(1-d(X))] = \E(2Yd(X) - d(X)) + \E(Y) + 1
	= 2u(d, 0.5) + \E(Y) + 1.
$$
% \normalsize
%Since we are assuming that $Y$ is not affected by $d$, 
The only term in this expression that depends on $d$ 
is the immediate utility $u$.  
Thus the decision rule that maximizes $u$ 
also maximizes accuracy.
\end{proof}

% Before proceeding to the main technical results, 
We note that the results in this section 
are related to the recent independent work 
of \cite{pmlr-v81-menon18a},
% In their paper, the authors 
who derive Bayes-optimal decision rules 
under the same parity constraints we consider here, 
working instead with the \textit{cost-sensitive risk}, 
$
\mathrm{CS}(d;c) = \pi(1-c)\mathrm{FNR}(d) + (1-\pi)c\mathrm{FPR}(d),
$
where $\pi = \P(Y = 1)$.  
One can show that $u(d, c) = -\mathrm{CS}(d;c) + \pi(1-c)$, 
and hence the problem of maximizing immediate utility considered here 
is equivalent to minimizing cost-sensitive risk 
as in \cite{pmlr-v81-menon18a}.  
In our case, it will be more convenient 
to work with the immediate utility.  

For the next set of results,
we follow \cite{corbett2017algorithmic} 
and assume that $p_{Y\mid X,Z}(X,Z)$, 
viewed as a random variable, 
has positive density on $[0,1]$.  
This ensures that the optimal rules 
are unique and deterministic 
by disallowing point-masses of probability 
that would necessitate tie-breaking 
among observations with equal probability.  
The first result that we state 
is a direct corollary 
of two results in \cite{corbett2017algorithmic}.  
It considers the case 
where we desire exact parity, i.e., 
that $q_a = q_b$.  
%Similarly, define $p_{Y|X} = p_{Y|X}(x) = \E(Y = 1 \mid X = x)$.

\begin{cor}
 The optimal decision rules $d^*$ 
 under various parity constraints 
 have the following form 
 and are unique up to a set of probability zero: 
 \begin{enumerate}
 \item Among rules satisfying statistical parity (the 100\% rule), the optimum is
%  \[
%  d^*(\vct x,z) = \begin{cases}
%  	1 & p_{Y|X, Z}(\vct x,z) \ge t_z \\
%     0 & \text{otherwise}
%  \end{cases},
%  \]
$d^*(\vct x,z) = \delta(p_{Y|X, Z}(\vct x,z) \ge t_z),$
 where $t_z \in [0,1]$ are constants 
 that depend only on group membership $z$. 
 \item Among rules that have equal false positive rates across groups, the optimum is
%  \[
%  d^*(\vct x,z) = \begin{cases}
%  	1 & p_{Y|X, Z}(\vct x,z) \ge s_z \\
%     0 & \text{otherwise}
%  \end{cases},
%  \]
 $d^*(\vct x,z) = \delta(p_{Y|X, Z}(\vct x,z) \ge s_z),$
 where $s_z$ are constants 
 that depend only on group membership $z$ 
 (but are different from $t_z$).
 \item (1) and (2) continue to hold 
 even in the resource-constrained setting 
 where the overall proportion of cases classified as positive is constrained.
 \end{enumerate}
\end{cor}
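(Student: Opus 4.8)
The plan is to recast each constrained problem as a Lagrangian, maximize pointwise, and then invoke the corresponding theorems of \cite{corbett2017algorithmic} for the duality bookkeeping. Write $p(\vct x, z) = p_{Y|X,Z}(\vct x, z)$, and recall that we are maximizing immediate utility $u(d,c) = \E[(Y-c)d(X,Z)]$ over rules $d$ that may depend on $(X,Z)$ (the accuracy case being $c = 0.5$ by Lemma~\ref{lemma:accuracy}). Conditioning on $Z$ and using the tower property, $u(d,c) = \sum_{z}\P(Z=z)\,\E[(p(X,z)-c)\,d(X,z)\mid Z=z]$, so the unconstrained objective already splits across groups and is maximized pointwise by $d(\vct x,z) = \delta(p(\vct x,z)\ge c)$; the only coupling between groups comes from the parity constraint.

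For part~(1) (statistical parity, $q_a = q_b$), attach a scalar multiplier $\lambda$ and form $L(d,\lambda) = u(d,c) + \lambda(q_a - q_b)$. Since $q_z = \E[d(X,z)\mid Z=z]$, the multiplier term is linear in $d$ and folds into the per-group objectives, shifting the coefficient of $d(\cdot,a)$ by $+\lambda/\P(Z=a)$ and that of $d(\cdot,b)$ by $-\lambda/\P(Z=b)$. For fixed $\lambda$, $L$ is again maximized pointwise by thresholding $p(\vct x,z)$ at $t_a = c - \lambda/\P(Z=a)$ and $t_b = c + \lambda/\P(Z=b)$, which depend only on group membership $z$. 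It then remains to choose $\lambda$ so that the constraint holds: under the hypothesis that $p(X,Z)$ has positive density on $[0,1]$, each $q_z$ is continuous and strictly decreasing in its threshold, hence $q_a - q_b$ is continuous and strictly monotone in $\lambda$, pinning down a unique feasible $\lambda$; the same density assumption kills ties and forces the maximizer to be deterministic and unique up to a probability-zero set. This is precisely the demographic-parity theorem of \cite{corbett2017algorithmic}.

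For part~(2) (equal false positive rates), write $\mathrm{FPR}_z = \E[(1-p(X,z))\,d(X,z)\mid Z=z]\,/\,\P(Y=0\mid Z=z)$, attach a multiplier $\mu$ to $\mathrm{FPR}_a - \mathrm{FPR}_b$, and collect the coefficient of $d(\vct x,z)$; it is affine in $p(\vct x,z)$, and at the feasible multiplier its leading coefficient is positive (as \cite{corbett2017algorithmic} show), so the pointwise maximizer is $\delta(p(\vct x,z)\ge s_z)$ for a $z$-dependent constant $s_z$, with $s_z \ne t_z$ in general because the multiplier now enters weighted by $1-p$ rather than by $1$. This is their predictive-equality theorem. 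For part~(3), a resource constraint $\sum_z \P(Z=z)\,q_z = k$ contributes one more multiplier whose net effect is a common additive shift of $c$ inside all thresholds; this preserves the ``threshold on $p(\vct x,z)$ at a $z$-dependent level'' form, so (1) and (2) carry over verbatim, again as recorded in \cite{corbett2017algorithmic}.

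The step I expect to be the genuine obstacle — and the reason this is stated as a corollary rather than proved from scratch — is showing that the Lagrangian argument is tight: that the feasible multipliers exist, that complementary slackness holds, and (for the FPR constraint) that the coefficient of $p(\vct x,z)$ does not flip sign at the relevant multiplier, so the optimizer really has threshold form rather than a reversed form. These are exactly the technical lemmas of \cite{corbett2017algorithmic} established under the positive-density hypothesis, so the remaining work is to check that their hypotheses hold here (binary $Y$, binary $Z$, $p(X,Z)$ with positive density on $[0,1]$) and, if one prefers to import \cite{pmlr-v81-menon18a} instead, to translate via $u(d,c) = -\mathrm{CS}(d;c) + \pi(1-c)$.
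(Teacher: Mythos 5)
Your proposal is correct and ultimately takes the same route as the paper, which proves this corollary in one line by combining Lemma~\ref{lemma:accuracy} (accuracy equals immediate utility at $c=0.5$) with Theorem~3.2 and Proposition~3.3 of \cite{corbett2017algorithmic}; you simply unpack the Lagrangian mechanics behind those cited results before deferring to them for the tightness and uniqueness arguments. Your per-group decomposition, the threshold formulas $t_a = c - \lambda/\P(Z=a)$ and $t_b = c + \lambda/\P(Z=b)$, and the observation about the potential sign flip in the FPR case are all consistent with what the cited theorems establish under the positive-density hypothesis.
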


\begin{proof}
(1) and (2) are direct corollaries of Lemma \ref{lemma:accuracy} combined with Thm 3.2 and Prop 3.3 of \cite{corbett2017algorithmic}. 
\end{proof}

The next set of results establishes optimality 
under general $p$-\% and CV rules.

\begin{prop}
	Under the same assumptions as above, 
    the optimum among rules 
    that satisfy the CV constraint $0 \le q_a - q_b < \gamma$ 
    or the $p$-\% rule also has the form
%      \[
%  d^*(\vct x,z) = \begin{cases}
%  	1 & p_{Y|X, Z}(\vct x,z) \ge t_{z} \\
%     0 & \text{otherwise}
%  \end{cases},
%  \]
$d^*(\vct x,z) = \delta(p_{Y|X, Z}(\vct x,z) \ge t_{z}),$
 where $t_{z} \in [0,1]$ are constants 
 that depend on the group membership $z$, 
 and on the choice of constraint parameter $\gamma$ or $p$.  
 The thresholds $t_z$ are different 
 for the CV constraint and $p$-\% rule.
\label{prop:cv_opt}
\end{prop}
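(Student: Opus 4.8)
The plan is to recast the constrained problem as maximization of the \emph{immediate utility} $u(d,c) = \E[(p_{Y|X,Z}(X,Z) - c)\,d(X,Z)]$ (legitimate by the discussion preceding Lemma~\ref{lemma:accuracy}, with $c=0.5$ recovering accuracy) subject to a single \emph{affine} inequality in $d$, and then to run the standard Lagrangian / complementary-slackness argument, exactly paralleling Thm~3.2 and Prop~3.3 of \cite{corbett2017algorithmic} but with exact parity replaced by its relaxation. Write $p(\vct x,z) := p_{Y|X,Z}(\vct x,z)$ and note that $q_z = \E[d(X,Z)\mid Z=z]$, so the CV gap is linear in $d$: $q_a - q_b = \E[d(X,Z)\,w(Z)]$ with $w(a) = 1/\P(Z=a)$ and $w(b) = -1/\P(Z=b)$, and the CV constraint becomes $\E[d(X,Z)\,w(Z)] \le \gamma$. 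The $p$-\% rule $q_b/q_a \ge p/100$ is, on the region $q_a > 0$, equivalent to the affine constraint $\E[d(X,Z)\,\tilde w(Z)] \le 0$ with $\tilde w(a) = (p/100)/\P(Z=a)$ and $\tilde w(b) = -1/\P(Z=b)$. (The one-sided requirement $q_a - q_b \ge 0$ is slack in the regime of interest, where $a$ is the advantaged group; otherwise it is handled symmetrically and does not change the form of the answer.)

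Take the CV case for concreteness. Introduce a multiplier $\lambda \ge 0$ and set $L(d,\lambda) = u(d,c) - \lambda\big(\E[d(X,Z)w(Z)] - \gamma\big) = \E\big[(p(X,Z) - c - \lambda w(Z))\,d(X,Z)\big] + \lambda\gamma$. For fixed $\lambda$ this is maximized pointwise by $d^*_\lambda(\vct x,z) = \delta\big(p(\vct x,z) \ge c + \lambda w(z)\big)$, i.e.\ by the group-specific thresholds $t_a = c + \lambda/\P(Z=a)$ and $t_b = c - \lambda/\P(Z=b)$ --- precisely the ``raise the bar in $a$, lower it in $b$'' mechanism described informally above. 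The analogous computation for the $p$-\% constraint yields $t_a = c + \lambda (p/100)/\P(Z=a)$ and $t_b = c - \lambda/\P(Z=b)$, which is why the thresholds differ between the two constraints.

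It remains to choose $\lambda^*$ so that complementary slackness holds. Let $g(\lambda) := q_a(d^*_\lambda) - q_b(d^*_\lambda)$. Because $p(X,Z)$ has positive density on $[0,1]$ it assigns zero mass to any single level, so $g$ is continuous (and weakly decreasing) in $\lambda$, with $g(0)$ equal to the unconstrained CV gap and $g(\lambda)\to -1$ as $\lambda\to\infty$. If $g(0)\le\gamma$ the unconstrained rule is already feasible and we take $\lambda^*=0$; otherwise the intermediate value theorem gives $\lambda^* > 0$ with $g(\lambda^*) = \gamma$. In either case, for any feasible $d$ we have $u(d,c) = L(d,\lambda^*) + \lambda^*\big(\E[d\,w(Z)] - \gamma\big) \le L(d,\lambda^*) \le L(d^*_{\lambda^*},\lambda^*) = u(d^*_{\lambda^*},c)$, where the first inequality uses feasibility and $\lambda^*\ge 0$, the second that $d^*_{\lambda^*}$ maximizes $L(\cdot,\lambda^*)$ pointwise, and the equality uses complementary slackness. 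Hence $d^* := d^*_{\lambda^*}$ is optimal and has the claimed group-thresholded form. Uniqueness up to a null set follows from the no-mass-at-a-point property again: any other optimizer must agree with $d^*$ wherever $p(\vct x,z)\ne t_z$, a set of full measure.

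The main obstacle I anticipate is not the Lagrangian bookkeeping but the two ``softness'' issues that distinguish this from the exact-parity results already in \cite{corbett2017algorithmic}: (i) verifying that a single multiplier suffices and that $g$ is well-behaved enough --- continuous, with range covering $\gamma$ --- to invoke the intermediate value theorem, which is exactly where the positive-density assumption is spent; and (ii) reducing the nonlinear ratio constraint of the $p$-\% rule to an affine constraint, which requires ruling out the degenerate case $q_a = 0$. Everything else --- the pointwise maximization of $L$, weak duality, and the uniqueness argument --- is routine once the problem is in this form.
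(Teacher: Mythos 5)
Your proof is correct, but it takes a genuinely different route from the paper's. The paper's argument is a two-stage decomposition: it conditions on whatever proportions $q_a,q_b$ the (assumed-to-exist) optimal rule assigns to the positive class, observes that both the CV and $p$-\% constraints depend on $d$ only through $(q_a,q_b)$, and then notes that among all rules achieving those proportions the identity $u(d,0.5)=\E[d(X,Z)(p_{Y|X,Z}-0.5)]$ forces the optimum to select, within each group, the $q_z$ fraction with the highest $p_{Y|X,Z}$ --- i.e., a group-specific threshold at the $1-q_z$ quantile. That argument is three lines long but leaves the thresholds uncharacterized and quietly presupposes existence of an optimizer. Your Lagrangian route instead rewrites both constraints as a single affine inequality $\E[d(X,Z)w(Z)]\le\cdot$, maximizes the Lagrangian pointwise to get the threshold form directly, and uses the positive-density assumption to run the intermediate value theorem and complementary slackness; this buys an explicit formula for the thresholds in terms of one multiplier ($t_a=c+\lambda/\P(Z=a)$, $t_b=c-\lambda/\P(Z=b)$, with the $p/100$ weight explaining why the two constraints yield different thresholds), a constructive existence argument, and a cleaner uniqueness claim --- at the cost of extra bookkeeping around the ratio-to-affine reduction, the degenerate case $q_a=0$, and the two-sidedness of the CV constraint, all of which you flag appropriately. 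Both proofs share the same underlying identity for $u(d,0.5)$ and the same positive-density assumption; neither fully grapples with the strict inequality $q_a-q_b<\gamma$ in the statement (your $\lambda^*$ lands exactly on $g(\lambda^*)=\gamma$, and the paper ignores the issue entirely), but that is a defect of the statement rather than of either argument.
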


\begin{proof}
Suppose that the optimal solution under the CV or $p$-\% rule constraint classifies proportions $q_a$ and $q_b$
of the advantaged and disadvantaged groups, respectively,
to the positive class. 
As shown in \citet{corbett2017algorithmic}, 
we can rewrite the immediate utility as 
\[
 u(d,0.5) = \E[d(X, Z)(p_{Y|X, Z} - 0.5)].
\]
Thus the utility will be maximized 
 when $d^*(X,Z) = 1$ 
for the $q_z$ proportion of individuals in each group 
that have the highest values of $p_{Y|X, Z}$.  
Since the optimal values of $q_z$ 
may differ between the CV-constrained solution 
and the $p$-\% solution, 
the optimal thresholds may differ as well.
\end{proof}

%The final result in this section 
Our final result
shows that a decision rule 
that does not directly use $z$ as an input variable 
or for determining thresholds 
will have lower accuracy 
than the optimal rule 
that uses this information.  
That is, we show that DLPs are suboptimal 
for trading off accuracy and impact parity.

\begin{theorem}
	Let $d^*(\vct x, z)$ be the optimal decision rule under a the CV-$\gamma$ or $p$-\% constraint. 
    Let $d_{\mathit{DLP}}(\vct x)$ be the optimal solution to a DLP. 
        If $d(\vct x,z)$ and $d_{\mathit{DLP}}(\vct x)$ satisfy CV or $p$-\% constraints 
        with the same $q_a$ and $q_b$, 
        the DLP solution results in lower or equal accuracy 
        (equal only if the solutions are the same.)
%     \end{enumerate}
\end{theorem}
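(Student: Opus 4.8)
The plan is to reduce the comparison to a single constrained optimization problem in which the DLP's rule is merely one feasible point, and then invoke the characterization already established in Proposition~\ref{prop:cv_opt}. First I would apply Lemma~\ref{lemma:accuracy} to replace ``accuracy'' by the immediate utility $u(d,0.5)$, and recall from the proof of Proposition~\ref{prop:cv_opt} the identity $u(d,0.5) = \E[d(X,Z)(p_{Y|X, Z} - 0.5)]$, so that everything comes down to comparing $d^*$ and $d_{\mathit{DLP}}$ on this one functional.

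Next I would fix the shared positive-classification proportions $q_a, q_b$ guaranteed by the hypothesis, and let $\mathcal{S}$ denote the set of \emph{all} decision rules $d(\vct x, z)$ — rules that are allowed to use $z$ — which classify a $q_z$ fraction of group $z$ to the positive class. The key observation is that, once $q_a$ and $q_b$ are fixed, $\E[d(X,Z)(p_{Y|X, Z} - 0.5)]$ decouples across groups and is maximized over $\mathcal{S}$ exactly by the rule that, within each group, sets $d = 1$ on the $q_z$ fraction of individuals with the largest values of $p_{Y|X, Z}$; this greedy rule is precisely $d^*$ (this is the content of the proof of Proposition~\ref{prop:cv_opt}, adapting \citet{corbett2017algorithmic}). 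Hence $d^*$ maximizes $u$ over all of $\mathcal{S}$. But $d_{\mathit{DLP}}(\vct x)$, although it ignores $z$, is still a perfectly legitimate element of $\mathcal{S}$: viewed as the map $(\vct x, z) \mapsto d_{\mathit{DLP}}(\vct x)$, it satisfies the group-membership-free constraints and, by assumption, classifies the same $q_z$ fraction of each group positively. Therefore $u(d_{\mathit{DLP}},0.5) \le u(d^*,0.5)$, and by Lemma~\ref{lemma:accuracy} the DLP's accuracy is at most that of $d^*$. For the equality clause, I would use the standing assumption that $p_{Y\mid X,Z}(X,Z)$ has positive density on $[0,1]$: this rules out point masses, so the top-$q_z$ maximizer of $u$ over $\mathcal{S}$ is unique up to a probability-zero set, and any rule not of this form loses a strictly positive amount of utility; hence equal accuracy forces $d_{\mathit{DLP}} = d^*$ almost surely.

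The main obstacle is Step two: making precise that fixing $q_a, q_b$ splits the problem into two independent ``select the top-scoring fraction'' subproblems and that the greedy selection is the \emph{unique} utility maximizer within $\mathcal{S}$. This is essentially already packaged inside the proof of Proposition~\ref{prop:cv_opt} (and the underlying results of \citet{corbett2017algorithmic}), so once it is invoked the theorem follows with almost no additional work. A secondary, purely bookkeeping point is to emphasize that the hypothesis ``$d$ and $d_{\mathit{DLP}}$ satisfy the constraint with the same $q_a$ and $q_b$'' is exactly what places both rules inside one common set $\mathcal{S}$, so no separate feasibility argument for $d_{\mathit{DLP}}$ is required.
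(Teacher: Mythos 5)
Your argument is correct and follows essentially the same route as the paper: both reduce accuracy to the immediate utility $u(d,0.5)=\E[d(X,Z)(p_{Y|X,Z}-0.5)]$ via Lemma~\ref{lemma:accuracy} and exploit the fact, established in Proposition~\ref{prop:cv_opt}, that once $q_a,q_b$ are fixed the group-wise top-$q_z$ thresholding rule $d^*$ maximizes $u$ over all rules with those positive rates --- a feasible set that contains $d_{\mathit{DLP}}$. The only cosmetic difference is that the paper verifies the inequality explicitly, writing $u(d^*,0.5)-u(d_{\mathit{DLP}},0.5)$ as a difference of conditional expectations over the two disagreement events $\{d^*=1,d_{\mathit{DLP}}=0\}$ and $\{d^*=0,d_{\mathit{DLP}}=1\}$ and invoking stochastic dominance of $p_{Y|X,Z}$, which yields the same equality criterion you obtain from uniqueness of the maximizer under the positive-density assumption.
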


\begin{proof}
 From Proposition \ref{prop:cv_opt},
 we know that the unique accuracy-optimizing solution is given by 
%  \[
%  d^*(\vct x,z) = \begin{cases}
%  	1 & p_{Y|X, Z}(\vct x,z) \ge t_{z} \\
%     0 & \text{otherwise}
%  \end{cases},
%  \]
 $d^*(\vct x,z) = \delta(p_{Y|X, Z}(\vct x,z) \ge t_{z}),$
 where $t_{z}$ is the 1 - $q_{z}$ quantile of $p_{Y|X, Z}$.  
 The difference in immediate utility 
 between the two decision rules 
 can be expressed as follows:
 \small
 \begin{align*}
 	&\E[d^*(X, Z)(p_{Y|X, Z} - .5)] - \E[d_{\mathit{DLP}}(X)(p_{Y|X, Z} - .5)] \\
     &= \E[(d^*(X, Z) - d_{\mathit{DLP}}(X))(p_{Y|X, Z} - 0.5)] \\
     &= \E[p_{Y|X, Z}\!-\! \!.5\!\! \mid\!\! d^*\! =\! 1, d_{\mathit{DLP}}\! =\! 0]\P(d^*\!\! =\! 1, d_{\mathit{DLP}}\! =\! 0)
     \!-\!\E[p_{Y|X, Z}\! -\!\! .5\! \!\mid\!\! d^*\!\! =\! 0, d_{\mathit{DLP}}\! =\! 1]\P(d^*\!\! =\! 0, d_{\mathit{DLP}}\! =\! 1) \\
     &= \bigl( \E[p_{Y|X, Z} - .5 \mid d^* = 1, d_{\mathit{DLP}} = 0]  - \E[p_{Y|X, Z} - .5 \mid d^* = 0, d_{\mathit{DLP}} = 1] \bigr) \P(d^* = 1, d_{\mathit{DLP}} = 0) \\
     &\ge 0
 \end{align*}
 \normalsize
 The final inequality follows 
 %from the observation 
 %that 
 since
 $d^*(X, Z) = 1$ for the highest values of $p_{Y|X, Z}$, 
 so $p_{Y|X,Z}$ is stochastically greater 
 on the event $\{d^*\!=\! 1, d_{\mathit{DLP}}\! =\! 0\}$ 
 than on $\{d^*\! =\! 0, d_{\mathit{DLP}}\! =\! 1\}$.  
 Note that equality holds only if 
 $\P(d^*\! =\! 1, d_{\mathit{DLP}}\! =\! 0) = 0$, i.e., 
 if the two rules are (almost surely) equivalent.
\end{proof}

%All results in this section 
Our results
continue to hold 
under ``do no harm'' constraints, where we require that any individual in the disadvantaged group who was classified as positive under 
% where the proportion of cases 
% in the disadvantaged group classified as positive 
% is constrained to be no lower 
% than the proportion under 
the unconstrained rule $d_\mathrm{uncons}(\vct x,z)$ remains positively classified.  This corresponds to the setting where 
the proportion of cases 
in the disadvantaged group classified as positive 
is constrained to be no lower 
than the proportion under the unconstrained rule 
%$d_\mathrm{uncons}(\vct x,z)$
(or no lower than some fixed value $q_a^\mathrm{min}$).  
Such constraints impose an upper bound on the optimal thresholds $t_b$, but do not change the structure of the optimal rules.

% \vspace{5px}
\paragraph{Functional equivalence when protected characteristic is redundantly encoded.}
% \paragraph{Functional equivalence when protected characteristic is redundantly encoded} 
Consider the case where the protected feature $z$ 
is redundantly encoded in the other features $\vct x$.  
More precisely, 
suppose that there exists a known subcomputation $g$ 
such that $z = g(\vct x)$.  
This allows for any function of the data $f(\vct x,z)$ 
to be represented as a function of $\vct x$ alone via $\tilde f(\vct x) = f(\vct x, g(\vct x))$.   
While it remains the case that $\tilde f(\vct x)$ does not directly use $z$ as an input variable---and thus satisfies treatment parity---$\tilde f$ should be no less legally suspect from a \textit{disparate treatment} perspective 
than the original function $f$ that uses $z$ directly. 
The main difference for the purpose of our discussion 
is that $\tilde f$, resulting from a DLP, may technically satisfy treatment parity, while $f$ does not.  
%Yet it is clear that $f$ and $\tilde f$ are functionally equivalent in two senses: (i) as mathematical objects; and more importantly, (ii) as mechanisms for classifying cases. 

While this form of ``strict'' redundancy is unlikely, characterizing this edge case is
important for considering whether DLPs should have different legal standing vis-a-vis disparate treatment than methods
that use $z$ directly. This is particularly relevant if one thinks of the `practitioner' in question as having discriminatory
intent. Furthermore, the partial encoding of the protected attribute is commonplace in settings where discrimination
is a concern (as with gender in our experiment in \S \ref{section:empirical}). Indeed, the very premise of DLPs requires that $\vct x$ is
significantly correlated with $z$. Moreover, DLPs provide an incentive for practitioners to game the system by adding
features that are predictive of the protected attribute but not necessarily of the outcome, as these would improve the
 DLP's performance.

\paragraph{Within-class discrimination when protected characteristic is partially redundantly encoded.}
% \paragraph{Within-class differentiation when protected characteristic is partially redundantly encoded} \label{subsec:partial}
% \note{Add a simple analytic example where there are two features, one of which is uncorrelated with $z$ and the other is uncorrelated with $y$.}
When the protected characteristic is partially encoded in the other features, disparate treatment may induce within-class discrimination by applying the benefit of the affirmative action unevenly, and can even harm some members of the protected class.
%This claim asserts a possibility, so it is sufficient to produce one example to support the claim. 
%In the following section, 
Next
we demonstrate this phenomenon empirically using 
(synthetically biased) university admissions data and several public datasets. 
The ease of producing such examples might convince the reader that the 
%highly 
varied effects of intervention with a DLP 
on members of the disadvantaged group raises practice and policy concerns about DLPs.

% \zack{Argument sketch}
% \begin{itemize}
% \item show that optimal thing to do is to 
% estimate $P(y)$ and $P(z)$ separately. \item Given $P(y)$ and $P(z)$ optimal decisions can be made with a trivial extension to the scoring mechanism from subsection 1. 
% \item Implication - a powerful enough model will implicitly learn $P(z)$.
% \item Can we demonstrate how exactly this might happen, even with say a linear model?
% \end{itemize}
% We demonstrate this phenomenon empirically the next section.

\section{Empirical Analysis}
\label{section:empirical}

This preceding analysis demonstrates
several theoretical advantages 
to increasing impact parity via treatment disparity:
\begin{itemize}
\item \textbf{Optimality: } 
As demonstrated for CV score and for $p$-$\%$ rule, 
intervention via per-group thresholds 
maximizes accuracy subject to an impact parity constraint.
\item \textbf{Rational ordering:} Within each group, individuals with higher probability 
of belonging to the positive class
are always assigned to the positive class 
ahead of those with lower probabilities.
\item \textbf{Does no harm to the protected group:} The treatment disparity intervention 
can be constrained to only benefit members of the disadvantaged class.
\end{itemize}

% DLPs do not directly apply .
% To avoid treatment disparity,
DLPs attempt to produce a classifier 
that satisfies the parity constraints,
by relying upon the proxy features 
to satisfy the parity metric. 
Typically, this is accomplished 
either by introducing constraints 
to a convex optimization problem, 
or by adding a regularization term 
and tuning the corresponding hyper-parameter. 
Because the CV score and $p$-$\%$ rule 
are non-convex in model parameters 
(scores only change 
when a point crosses the decision boundary),
\cite{kamishima2011fairness, zafar2017fairness}
introduce convex surrogates 
aimed at reducing the correlation
between the sensitive feature and the prediction.

% ALEX: replacing "authors" with "approaches" so that the criticism is targeted at the methods not their creators.
These approaches presume that the proxy variables 
contain information about the sensitive attribute.
Otherwise, the parity could only be satisfied 
via a trivial solution 
(e.g.~assign either \emph{everyone} or \emph{nobody} 
to the positive class). 
So we must consider two scenarios: 
(i) the proxy variables $\mathbf{x}$ 
fully encode $z$,
in which case, a sufficiently powerful DLP 
will implicitly reconstruct $z$,
because this gives the optimal solution 
to the impact-constrained objective;
and (ii) $\vct x$ doesn't fully capture $z$, or the DLP is unable to recover $z$ from $\vct x$,
in which case the DLP may be sub-optimal,
may violate rational ordering within groups, 
and may harm members of the disadvantaged group.

%
%  ADDING THIS FUCKER BACK LET'S BE CAREFUL TO ADD ANY REFS IN INTO ETC SO IT'S NOT DROPPING OUT OF THE FUCKING SKY. 
% 
\subsection{Synthetic data example: work experience and hair length in hiring} \label{sec:synthetic}
To begin, we illustrate our arguments empirically 
with a simple synthetic data experiment.  
To construct the data, 
we sample $n_{\text{all}} = 2000$ total observations 
from the data-generating process described below.  
70\% of the observations are used for training, 
and the remaining 30\% are reserved for model testing.  
\begin{align*}
z_i &\sim \mathrm{Bernoulli}(0.5) \\
\mathrm{hair\_length}_i \mid z_i = 1 &\sim  35 \cdot \mathrm{Beta}(2,2) \\ 
\mathrm{hair\_length}_i \mid z_i = 0 &\sim  35 \cdot \mathrm{Beta}(2,7) \\ 
\mathrm{work\_exp}_i \mid z_i &\sim  \mathrm{Poisson}(25 + 6z_i) - \mathrm{Normal}(20, \sigma = 0.2) \\
y_i \mid \mathrm{work\_exp}  &\sim 2 \cdot \mathrm{Bernoulli}(p_i) - 1\text{,} \\ 
\text{where  } p_i &= 1 / \left(1 + \exp[-(-25.5 + 2.5\mathrm{work\_exp})]\right)
\end{align*}
This data-generating process 
has the following key properties: 
(i) the historical hiring process 
was based solely on the number of years of work experience; 
(ii) because women on average have fewer years of work experience than men (5 years vs. 11), 
men have been hired at a much higher rate than women; 
and (iii) women have longer hair than men, 
a fact that was irrelevant to historical hiring practice.

Figure \ref{fig:intragroup-disparity} 
shows the test set results of applying a DLP 
to the available historical data 
to equalize hiring rates between men and women.  
We apply the DLP proposed by \citet{zafar2017fairness}, 
using code available from the authors.\footnote{\url{https://github.com/mbilalzafar/fair-classification/}} 
While the DLP nearly equalizes hiring rates 
(satisfying a $105$-\% rule), 
it does so through a problematic 
within-class discrimination mechanism.  
The DLP rule advantages individuals 
with longer hair over those with shorter hair 
and considerably longer work experience.  
We find that several women 
who would have been hired under historical practices, 
owing to their 12+ years of work experience, 
would not be hired under the DLP 
due to their short hair 
(i.e., their male-like characteristics captured in $\vct x$). Similarly, several men,
who would not have been hired based on work experience alone, are advantaged by the DLP due to their longer hair  
(i.e., their `female-like' characteristics in $\vct x$).  
The DLP violates rational ordering, 
and harms some of the most qualified individuals 
in the protected group.  
Group parity is achieved at the cost of individual unfairness.

Granted, we might not expect
factors such as hair length 
to knowingly be used as inputs 
to a typical hiring algorithm.  
% This example was constructed to
We construct this toy example to
illustrate a more general point:  
since DLPs do not have direct access 
to the protected feature, 
they must infer from the other features
which people are most likely to belong to each subgroup.  
Using the protected feature directly 
can yield more reasonable policies: 
For example, by applying per-group thresholds,
we could hire the highest rated individuals in each group, 
rather than distorting rankings within groups
based on how female/male individuals \emph{appear} 
to be from their other features.

\begin{figure*}[t]
\begin{center}
\includegraphics[width=0.33\linewidth]{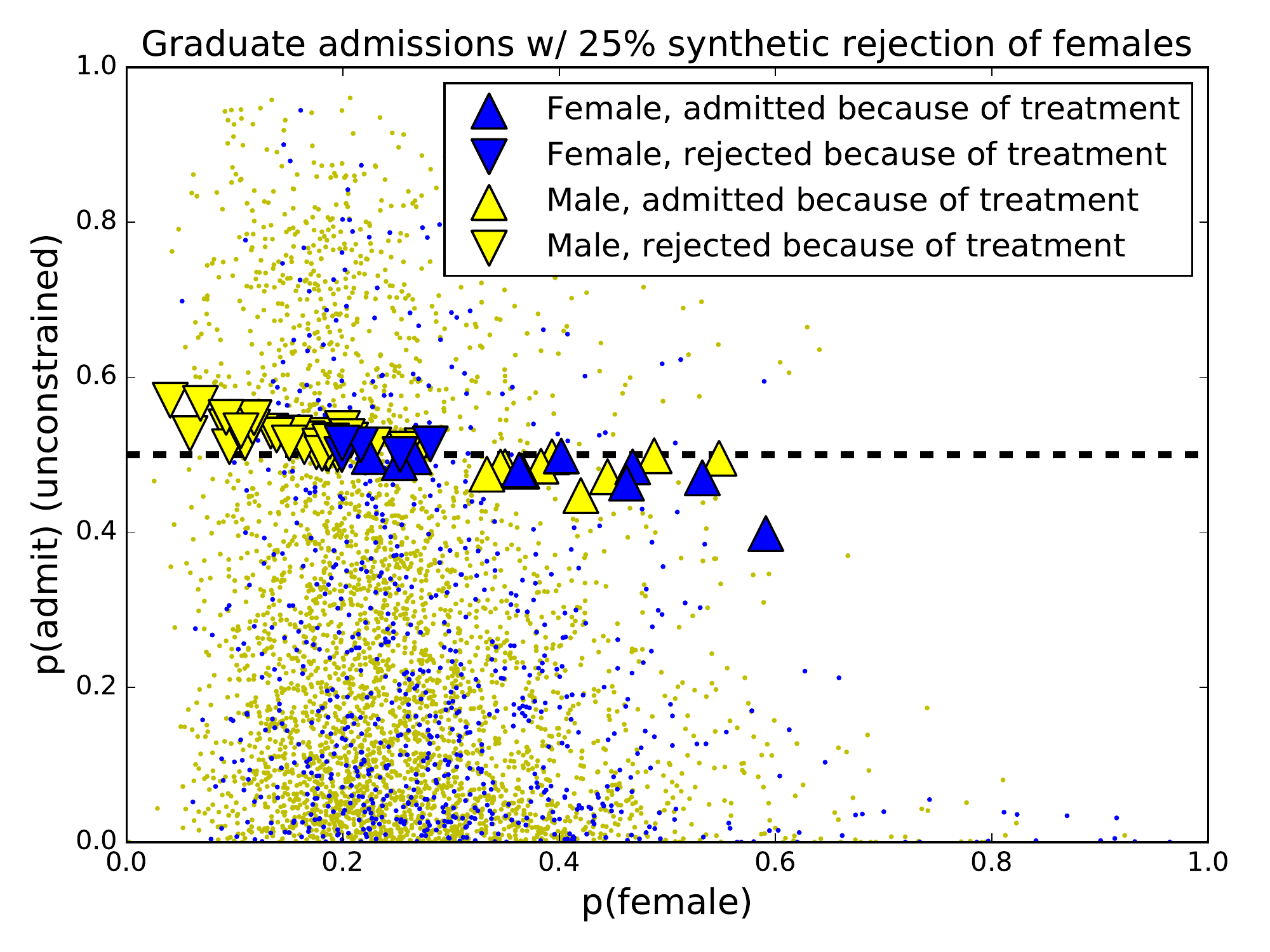}\includegraphics[width=0.33\linewidth]{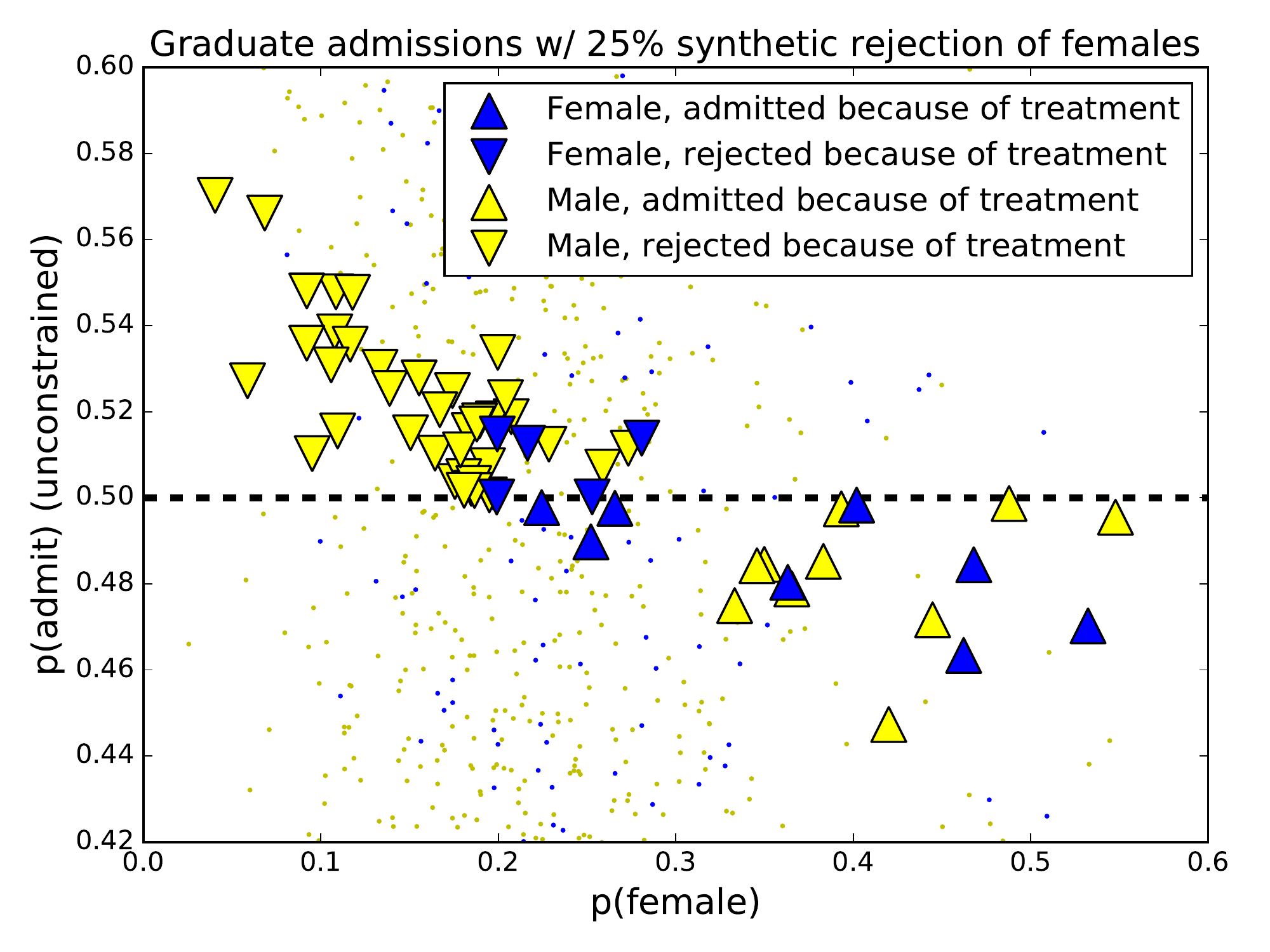}\includegraphics[width=0.33\linewidth]{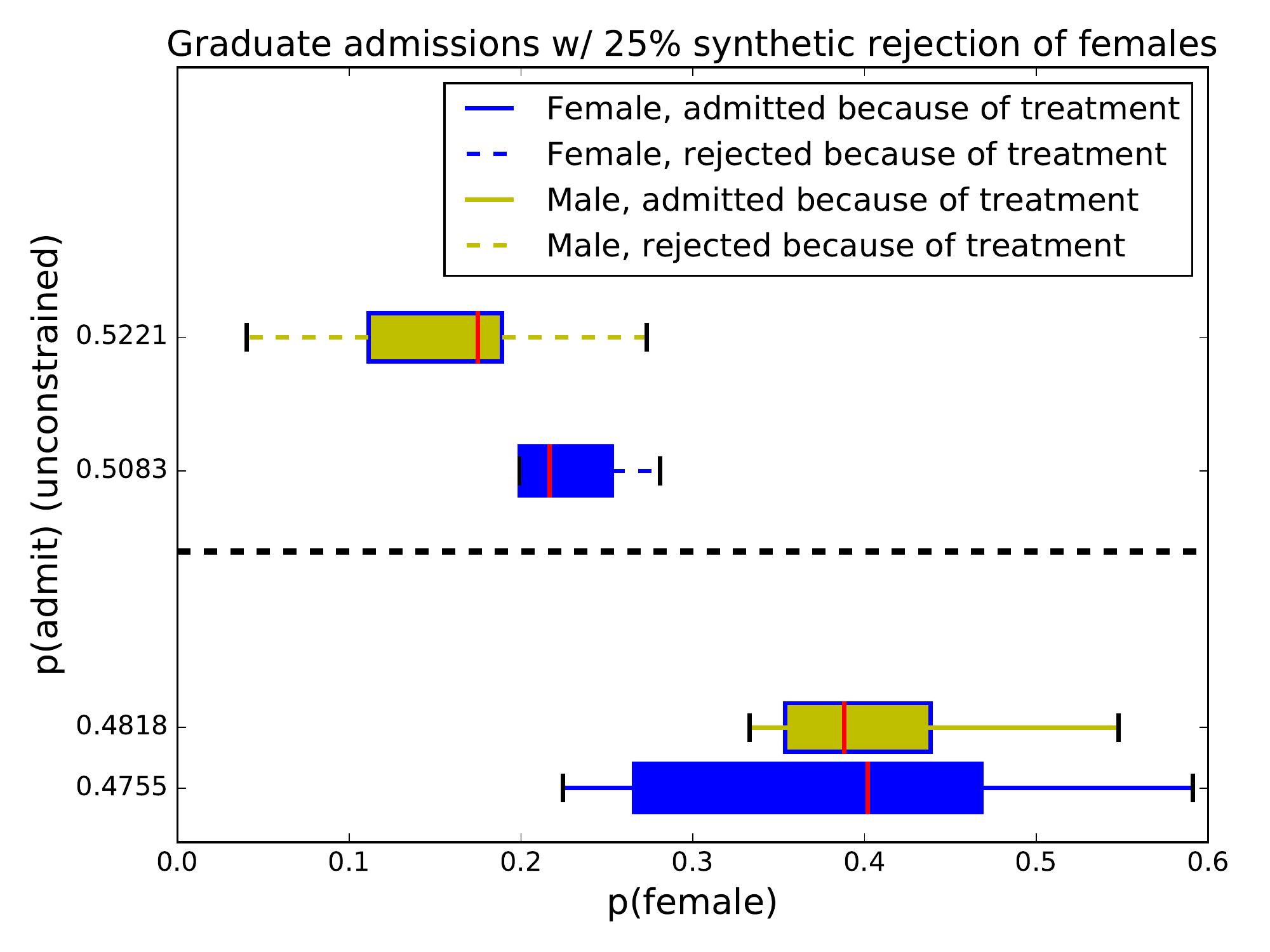}
\end{center}
\caption{
(left) probability of the sensitive variable 
versus (unconstrained) admission probability, 
on unseen test data. 
%  above 0.5 
Downward triangles indicate individuals 
rejected
\emph{only} after applying the DLP (``treatment''), 
while upward triangles indicate individuals 
accepted \emph{only} by the DLP.
The remaining $\sim$4,000 blue/yellow dots
indicate people whose decisions are not altered. 
Many students benefiting from the DLP 
are males who `look like' females based on other features, 
whereas females who `look like' males are hurt by the DLP. Detail view (center) and summary statistics (right) of the same plot.}
\label{fig:admissions}
\end{figure*}

% \vspace{-0.5em}

\subsection{Case study: Gender bias in CS graduate admissions}
\label{sec:admissions}
\label{sec:threshold}

% \vspace{-1em}

For our next example,
we demonstrate a similar result 
but this time by analyzing real data 
%We first consider a real dataset, 
with synthetic discrimination, to empirically demonstrate our arguments.
We consider a sample of $\sim$9,000 students 
considered for admission 
to the MS program 
of a large US university
over an 11-year period.
%(2006-2016). 
Half of the examples are withheld for testing. 
Available attributes include
basic information, 
such as country of origin, interest areas, 
and gender, 
as well as quantitative fields 
such as GRE scores. 
Our data also includes a label 
in the form of an `above-the-bar' decision 
provided by faculty reviewers.
%\footnote{
% These decisions do not precisely determine 
% whether a student is made an offer, 
% but rather represent an `above-the-bar' assessment 
% that is used to guide admissions decisions,
% and can be considered as a binary label.}
%
Admission rates for male and female applicants 
were observed to be within 1\% of each other.
So, to demonstrate the effects of DLPs, 
we corrupt the data with \emph{synthetic discrimination}.
Of all women who were admitted, i.e., $z_i = b, y_i = 1$,
we flip 25\% of those labels to $0$:
giving noisy labels $\bar{y}_i = y_i \cdot \eta$, 
for $\eta \sim \textit{Bernoulli}(.25)$.
This simulates %a setting in which 
%the training data exhibits a historical bias.
historical bias in the training data.

We then train three logistic regressors: 
(1) To predict the (synthetically corrupted) labels $\bar y_i$
from the non-sensitive features 
$x_i$; 
(2) The same model, applying the DLP of \citep{zafar2017fairness}; and 
(3) A model 
to predict the sensitive feature  $z_i$
from the non-sensitive features $x_i$. 
The data contains limited information 
that predicts gender, 
though such predictions can be made better 
than random (AUC=0.59) 
due to different rates of gender imbalance 
across (e.g.)~countries and interests.

%JULIAN: Note that my unconstrained classifier does not use the sensitive variable for prediction.

% To shape our intuition for what is happening here, we also train a classifier to predict $z$ given $z_i$. 

Figure~\ref{fig:admissions} (left) shapes 
our basic intuition for what is happening:
Considering the probability of admission 
for the unconstrained classifier (y-axis), 
students whose decisions are `flipped' 
(after applying the fairness constraint)
tend to be those close to the decision boundary. 
Furthermore, students \emph{predicted} to be male (x-axis)
tend to be flipped to the negative class 
(left half of plot) 
while students \emph{predicted} to be female 
tend to be flipped to the positive class 
(right half of plot). 
This is shown in detail 
in Figure~\ref{fig:admissions} (center and right). 
Of the 43 students whose decisions are flipped to `non-admit,' 5 are female, 
each of whom has `male-like' characteristics 
according to their other features
as demonstrated in our synthetic hair-length example.
Demonstrated here with real-world data,
% Here
the DLP both disrupts the within-group ordering
and violates the \emph{do no harm} principle 
by disadvantaging some women who, but for the DLP, 
would have been admitted. 

% \julian{Add some take-home to the above. Also, fix the p-rule stuff below to match your notation.}
\paragraph{Comparison with Treatment Disparity.}
% \label{sec:threshold}
To demonstrate the better performance 
of per-group thresholding, 
%(violating treatment parity), 
we implement a simple decision scheme
and compare its performance to the DLP.

Our thresholding rule
for maximizing accuracy 
subject to a $p$-$\%$ rule works as follows:
Recall that the $p$-$\%$ rule requires 
that $q_b/q_a > p/100$, which can be written as
$\frac{p}{100} q_a - q_b < 0$.
We denote the quantity $\frac{p}{100} q_a - q_b$ as the $p$-gap.
To maximize accuracy subject to satisfying the $p$-$\%$ rule,
we construct a score
that quantifies reduction in $p$-gap 
per reduction in accuracy.
Starting from the accuracy-maximizing classifications $\hat y$
(thresholding at $.5$),
we then flip those predictions 
which close the gap fastest:

\begin{enumerate}
\item Assign each example with $\lbrace \tilde{y}_i =0, z_i=b\rbrace$ or $\lbrace\tilde{y}_i = 1, z_i=a \rbrace$,
a score $c_i$ 
equal to the reduction in the p-gap 
divided by the reduction in accuracy:
\begin{enumerate}
\item For each example in group $a$ with initial $\hat{y}_i=1$,
$c_i = \frac{p}{100n_a(2\hat{p}_i-1)}$.
\item For each example in group $b$ with initial $\hat{y}_i=0$,
$c_i = \frac{1}{n_b(1-2\hat{p}_i)}$.\\[-0.5\baselineskip]
\end{enumerate}
\item Flip examples in descending order 
according to this score 
until the desired CV-score is reached.
\end{enumerate}
These scores do not change after each iteration, so the greedy policy leads to optimal flips (equivalently, optimal classification thresholds).

% \begin{wraptable}{r}{.5\linewidth}
\begin{table}
\centering
% \vspace{-0.5\baselineskip}
\caption{Statistics of public datasets.\label{tab:datastats}}
\footnotesize 
% \vspace{-2mm}
% \setlength{\tabcolsep}{2.25pt}
\begin{tabular}{llllr}
\toprule
\textbf{dataset} & \textbf{source} & \textbf{protected feature} & \textbf{prediction target} & $n$ \\
\midrule
Income & UCI \citep{uci_income} & Gender (female) & income $>$ \$50k & 32,561\\
Marketing & UCI \citep{uci_marketing} & Status (married) & customer subscribes & 45,211\\
Credit & UCI \citep{uci_default} & Gender (female) & credit card default & 30,000\\
Employee Attr. & IBM \citep{ibm_data} & Status (married) & employee attrition & 1,470\\
Customer Attr. & IBM \citep{ibm_data} & Status (married) & customer attrition & 7,043\\
\bottomrule
\end{tabular}
\normalsize
% \vspace{-3mm}
% \setlength{\tabcolsep}{6pt}
% \vspace{-0.5\baselineskip}
\end{table}
% \end{wraptable}

The unconstrained classifier achieves a p-\% rule of 71.4\%.  By applying this thresholding strategy, 
we were able to obtain the same accuracy 
as the method of \cite{zafar2017fairness}, 
but with a higher $p$-\% rule of 78.3\% compared to 77.6\%.  Note that on this data, the method of \cite{zafar2017fairness} maxes out at a $p$-\% rule of 77.6\%.  That is, the method is limited in what $p$-\% rules may be achieved.  By contrast, the thresholding rule can achieve any desired parity level.  Subject to a $<1\%$ drop in accuracy relative to the DLP we can achieve a $p$-\% rule of $\sim{}100$\%.
% Similarly, we could achieve a modest improvement in accuracy (<0.1\%) 
% while maintaining the same $p$-\% rule as the method of \cite{zafar2017fairness}.

% This example reveals that the method
% of \cite{zafar2017fairness}
% cannot produce any specified $p$-\%-rule. 
% On this data
% their algorithm maxes out at a $p$-\% rule of 77.59\%, 
% compared to a $p$-\% rule of 71.44\% 
% by na\"ive classification (on unseen test data). 
% Both have similar accuracy: 
% given that both positive labels 
% and female applicants are a minority, 
% assigning negative labels to males 
% close to the boundary impacts accuracy very little. 
% Both methods had accuracy of around 78\% on this data.
% Critically though, 
% by applying an optimal thresholding strategy, 
% we were able to obtain the same accuracy 
% as the method of \cite{zafar2017fairness}, 
% but with a higher $p$-\% rule of 78.34\%; subject to a $<1\%$ drop in accuracy we can achieve a $p$-\% rule of $\sim{}100$\%.
% Similarly, we could achieve a modest improvement in accuracy (<0.1\%) 
% while maintaining the same $p$-\% rule as the method of \cite{zafar2017fairness}.

% \vspace{-0.5em}

\subsection{Examples on public datasets}

% \vspace{-0.5em}

Finally, for reproducibility,
we repeat our experiments from Section \ref{sec:admissions}
on a variety of public datasets (code and data will be released at publication time).
%\footnote{Code and data available on \url{http://jmcauley.ucsd.edu/code/fairness/}}
Again we compare applying our simple thresholding scheme against the fairness constraint of \cite{zafar2017fairness}, considering a binary outcome 
and a single protected feature. 
Basic info about these datasets 
(including the prediction target and protected feature) 
is shown in Table~\ref{tab:datastats}.

The protocol we follow is the same as in Section \ref{sec:admissions}. 
Each of these datasets exhibits a certain degree of bias 
w.r.t.~the protected characteristic (Table~\ref{tab:results}), 
so no synthetic discrimination is applied. 
In Table~\ref{tab:results},
we compare (1) The $p$-\% rule obtained 
using the classifier of \cite{zafar2017fairness} 
compared to that of a na\"ive classifier 
(column k vs.~column h); and 
(2) The $p$-\% rule obtained 
when applying our thresholding strategy 
from Section~\ref{sec:threshold}.
As before, half of the data are withheld for testing.

First, we note that in most cases, 
the method of \cite{zafar2017fairness} increases 
the $p$-\% rule (column k vs.~h),
while maintaining an accuracy similar 
to that of unconstrained classification (column i vs.~f). 
One exception is the UCI-Credit dataset, 
in which \emph{both} the accuracy 
and the $p$-\% rule simultaneously decrease; 
although this is against our expectations, 
note that the optimization technique of \cite{zafar2017fairness} is an approximation scheme 
and does not offer accuracy guarantees in practice (nor can it in general achieve a $p$-\% rule of 100\%). 
However these details are implementation-specific 
and not the focus of this paper.
Second, as in Section~\ref{sec:threshold}, 
we note that the optimal thresholding strategy 
is able to offer a strictly larger $p$-\% rule 
(column l vs. k) at a given accuracy 
(in this case, the accuracy from column i). 
In most cases, we can obtain a $p$-\% rule 
of (close to) 100\% at the given accuracy.

\begin{table*}[t!]
\centering
\caption{Comparison between unconstrained classification, DLPs, and thresholding schemes.
Note that the $p$-\% rules from \cite{zafar2017fairness} were the strongest that could be obtained with their method; 
on complex datasets $p$-\% rules of 100\% 
are rarely obtained in practice, 
due to their specific approximation scheme.
Employee and Customer datasets are from IBM, the others are UCI datasets.
\label{tab:results}}
\scriptsize
\setlength{\tabcolsep}{3pt}
\begin{tabular}{lcccc|ccr|ccr|r}
\toprule
\multicolumn{5}{c|}{\textbf{basic statistics}} & \multicolumn{3}{c|}{\parbox{0.17\textwidth}{\centering \textbf{na\"ive (unconstrained)\\ classification}}} & \multicolumn{3}{c|}{\parbox{0.17\textwidth}{\centering \textbf{fair (constrained)\\ classification \citep{zafar2017fairness}}}} & \multicolumn{1}{c}{\parbox{0.08\textwidth}{\centering \textbf{optimal\\ threshold}}} \\
\midrule
\multicolumn{1}{c}{\textbf{dataset}} & \textbf{\%prot.} & 
\parbox{0.06\textwidth}{\centering \textbf{\%prot.\\ in +'ve}} & \parbox{0.09\textwidth}{\centering \textbf{\%non-prot.\\ in +'ve}} & \textbf{label $p$-\%} & \textbf{acc.} & \parbox{0.11\textwidth}{\centering \textbf{prot./non-prot.\\ in positive}} & \textbf{$p$-\%} & \textbf{acc.} & \parbox{0.11\textwidth}{\centering \textbf{prot./non-prot.\\ in positive}} & \textbf{$p$-\%} & \parbox{0.11\textwidth}{\centering \textbf{$p$-\% at\\ const.~acc.}}\\[1mm]
\multicolumn{1}{c}{a} & b & c & d & e & f & g & \multicolumn{1}{c|}{h} & i & j & \multicolumn{1}{c|}{k} & \multicolumn{1}{c}{l}\\
\midrule
Income    & 66.9\% & 30.6\% & 10.9\% & 35.8\% & 0.85 & \ 8\% / 25\% & 31\% & 0.85 & \ 7\% / 24\% & 29\% & 52.9\% \\
Marketing & 60.2\% & 14.1\% & 10.1\% & 71.9\% & 0.89 & \ 3\% / \ 4\% & 82\% & 0.89 & \ 3\% / \ 3\% & 102\% & 100.3\% \\
Credit    & 60.4\% & 24.1\% & 20.8\% & 86.0\% & 0.82 & 10\% / 12\% & 88\% & 0.74 & 21\% / 25\% & 85\% & 100.0\% \\
Employee  & 45.8\% & 19.2\% & 12.5\% & 65.0\% & 0.87 & \ 8\% / 12\% & 65\% & 0.86 & \ 8\% / 11\% & 69\% & 100.4\%\\
Customer  & 48.3\% & 33.0\% & 19.7\% & 59.7\% & 0.80 & 15\% / 30\% & 49\% & 0.79 & 16\% / 19\% & 84\% & 100.2\%\\
\bottomrule
\end{tabular}
\normalsize
\setlength{\tabcolsep}{6pt}
\end{table*}

We emphasize that the goal of our experiments 
is not to `beat' the method of \citep{zafar2017fairness}, 
or even to comment on any specific discrimination-aware classification scheme. 
Rather, we emphasize that \emph{any} DLP
is fundamentally upper-bounded 
(in terms of the $p$-\% rule/accuracy trade-off) 
by simple schemes that explicitly consider the protected feature. 
% Not only do o
Our experiments validate this claim, 
and reveal that %that the practical difference 
% between the two schemes is significant: 
the two schemes make strikingly different decisions. 
While concealing the protected feature 
from the classifier may be conceptually desirable, 
% practitioners of such schemes 
practitioners should be aware of the consequences. 
% of doing so.

% \section{Related Work Beyond DLPs}
% \label{section:related}
% \input{sections/related.tex}

\section{Discussion}
\label{section:discussion}

\vspace{-2mm}

%JULIAN: Replaced a page's worth of arguments with one sentence
\paragraph{Coming to terms with treatment disparity.}
Legal considerations aside, 
treatment disparity approaches 
have three advantages over DLPs: 
they optimally trade accuracy for representativeness, preserve rankings among members of each group,
and do no harm to members of the disadvantaged group. 
In addition,
%to these three properties,
treatment disparity has another advantage:
by setting class-dependent thresholds,
it's easier to understand
how treatment disparity impacts individuals. 
It seems plausible that policy-makers
could reason about thresholds 
to decide on the right trade-off
between group equality and individual fairness. 
% With indirect methods for increasing impact parity,
By contrast the tuning parameters of DLPs
may be harder to reason about from a policy standpoint.
% it might be harder to reason about the intervention. 
% As an example, it seems harder to express policy
% in terms of the value of a regularization coefficient 
% (compared to a threshold).
%
Several key challenges remain.  
Our theoretical arguments demonstrate 
that thresholding approaches are optimal 
in the setting where we assume complete knowledge 
of the data-generating distribution.  
It is not always clear how best 
to realize these gains in practice, 
where imbalanced or unrepresentative datasets 
can pose a significant obstacle to accurate estimation.
% Furthermore, some of our results are tailored 
% to the CV or the $p$-\% rule notions of group parity.  
% As shown in \cite{hardt2016equality, woodworth2017learning} and \cite{dwork2017decoupled}, 
% satisfying other parity criteria can be more difficult. 

\vspace{-2mm}

% \vspace{5px}
% \noindent\textbf{Separating Estimation and Decision-Making \enskip}
\paragraph{Separating estimation from decision-making.}
In the context of algorithmic, 
or algorithm-supported decision-making, 
it's often useful to obtain not just a classification, 
but also an accurate probability estimate.  
These estimates could then be incorporated 
into the decision-theoretic part of the pipeline 
where appropriate measures could be taken to
align decisions with social values.   
By intervening at the modeling phase, 
DLPs distort the predicted probabilities themselves.  
It's not clear what the outputs of 
the resulting classifiers actually signify.
In unconstrained learning approaches, 
even if the label itself 
may reflect historical prejudice, 
one at least knows 
what is being estimated. 
This leaves open the possibility of intervening at decision time to promote more equal outcomes.  

\vspace{-2mm}

\paragraph{Fairness beyond disparate impact}
How best to quantify discrimination and unfairness 
remains an important open question.  
The CV scores and $p-\%$ rules 
% addressed in this paper  
offer one set of definitions,
but there are many other parity criteria
to which our results do not directly apply,
e.g., equality of opportunity \cite{hardt2016equality}.
% ---requiring equality of true positive rates across groups---has received considerable attention. 
Other  notions of fairness and the trade-offs between them 
have been studied \citep{joseph2016rawlsian,kleinberg2016inherent,chouldechova2017fairlong,berk2017fairness,ritov2017conditional}.  
In a recent paper, \citet{zafar2017parity} 
depart from parity-based definitions 
and propose instead a preference-based notion of fairness. \citet{dwork2017decoupled} address the problem 
of how best to incorporate information 
about protected characteristics 
for several of these other fairness criteria.

Problematically, research into fairness in ML 
is often motivated by the case 
in which our ground-truth data is tainted,
capturing existing discriminatory patterns.  
% It's not clear what we can do 
% absent unbiased ground truth data.
Characterizing different forms of data bias, how to detect them, and how to draw valid inference from such data
remain important outstanding challenges.

% Even if we accept that the solution 
% for many proportional representation problems 
Even in settings where treatment disparity in favor of disadvantaged groups is an acceptable solution, 
questions remain of ``how'', ``how much?'' and ``when?''.
% How can we say when treatment disparity
% is correcting for recent discrimination manifested, e.g.~as biased labels? 
% When does it amount to affirmative action, 
While in some cases treatment disparity may arguably be correcting for omitted variable bias historical discrimination, in other settings it may be viewed as itself a form of discrimination.
For example, in the United States, 
Asian students are simultaneously 
over-represented and discriminated 
against in higher education \citep{AsianNYT}.
% For now, it seems that these judgments 
% must be exogenously specified 
% by people cognizant of the social context 
% in which algorithms operate. 
Such policy judgments require a keen understanding 
and awareness of the social and historical context 
in which the algorithms are developed and meant to operate. 
Recent work on identifying proxy discrimination \citep{datta2017proxy} and causal formulations of fairness \citep{nabi2017fair,kilbertus2017avoiding,kusner2017counterfactual} 
offer some promising approaches to translating such understanding into technological solutions.  
% To answer these questions,
% it would help to have a better understanding of by what mechanisms and to what degree 
% the data has been influenced by prejudice. 
% Perhaps data mining and machine learning 
% have some role to play in asking these questions? 
% The answers could guide decisions about where and how strongly to intervene. 

\bibliographystyle{unsrtnat}
\bibliography{rethinking-disparate}

\begin{thebibliography}{34}
\providecommand{\natexlab}[1]{#1}
\providecommand{\url}[1]{\texttt{#1}}
\expandafter\ifx\csname urlstyle\endcsname\relax
  \providecommand{\doi}[1]{doi: #1}\else
  \providecommand{\doi}{doi: \begingroup \urlstyle{rm}\Url}\fi

\bibitem[Civ(1964)]{CivilRights1964}
Civil rights act of 1964, 1964.
\newblock Accessed on September 11th, 2017.

\bibitem[Hartocollis and Saul(2017)]{AsianNYT}
Anemona Hartocollis and Stepanie Saul.
\newblock Affirmative action battle has a new focus: Asian americans.
\newblock 2017.
\newblock URL
  \url{https://www.nytimes.com/2017/08/02/us/affirmative-action-battle-has-a-new-focus-asian-americans.html?mcubz=1}.

\bibitem[Pedreshi et~al.(2008)Pedreshi, Ruggieri, and
  Turini]{pedreshi2008discrimination}
Dino Pedreshi, Salvatore Ruggieri, and Franco Turini.
\newblock Discrimination-aware data mining.
\newblock In \emph{KDD}, 2008.

\bibitem[Kamishima et~al.(2011)Kamishima, Akaho, and
  Sakuma]{kamishima2011fairness}
Toshihiro Kamishima, Shotaro Akaho, and Jun Sakuma.
\newblock Fairness-aware learning through regularization approach.
\newblock In \emph{ICDM Workshops}, 2011.

\bibitem[Zafar et~al.(2017{\natexlab{a}})Zafar, Valera, Gomez~Rodriguez, and
  Gummadi]{zafar2017fairness}
Muhammad~Bilal Zafar, Isabel Valera, Manuel Gomez~Rodriguez, and Krishna~P
  Gummadi.
\newblock Fairness constraints: Mechanisms for fair classification.
\newblock In \emph{AISTATS}, 2017{\natexlab{a}}.

\bibitem[Kamiran and Calders(2009)]{kamiran2009classifying}
Faisal Kamiran and Toon Calders.
\newblock Classifying without discriminating.
\newblock In \emph{Computer, Control and Communication}, 2009.

\bibitem[Kamiran et~al.(2010)Kamiran, Calders, and
  Pechenizkiy]{kamiran2010discrimination}
Faisal Kamiran, Toon Calders, and Mykola Pechenizkiy.
\newblock Discrimination aware decision tree learning.
\newblock In \emph{ICDM}, 2010.

\bibitem[Kim(2017{\natexlab{a}})]{kim2017auditing}
Pauline Kim.
\newblock Auditing algorithms for discrimination.
\newblock 2017{\natexlab{a}}.

\bibitem[Kim(2017{\natexlab{b}})]{kim2017data}
Pauline~T Kim.
\newblock Data-driven discrimination at work.
\newblock \emph{William \& Mary Law Review}, 58\penalty0 (3),
  2017{\natexlab{b}}.

\bibitem[Corbett-Davies et~al.(2017)Corbett-Davies, Pierson, Feller, Goel, and
  Huq]{corbett2017algorithmic}
Sam Corbett-Davies, Emma Pierson, Avi Feller, Sharad Goel, and Aziz Huq.
\newblock Algorithmic decision making and the cost of fairness.
\newblock \emph{arXiv preprint arXiv:1701.08230}, 2017.

\bibitem[Dwork et~al.(2017)Dwork, Immorlica, Kalai, and
  Leiserson]{dwork2017decoupled}
Cynthia Dwork, Nicole Immorlica, Adam~Tauman Kalai, and Max Leiserson.
\newblock Decoupled classifiers for fair and efficient machine learning.
\newblock \emph{arXiv preprint arXiv:1707.06613}, 2017.

\bibitem[Bechavod and Ligett(2017)]{bechavod2017learning}
Yahav Bechavod and Katrina Ligett.
\newblock Learning fair classifiers: A regularization-inspired approach.
\newblock \emph{arXiv preprint arXiv:1707.00044}, 2017.

\bibitem[Hardt et~al.(2016)Hardt, Price, Srebro, et~al.]{hardt2016equality}
Moritz Hardt, Eric Price, Nati Srebro, et~al.
\newblock Equality of opportunity in supervised learning.
\newblock In \emph{NIPS}, 2016.

\bibitem[Ritov et~al.(2017)Ritov, Sun, and Zhao]{ritov2017conditional}
Ya'acov Ritov, Yuekai Sun, and Ruofei Zhao.
\newblock On conditional parity as a notion of non-discrimination in machine
  learning.
\newblock \emph{arXiv preprint arXiv:1706.08519}, 2017.

\bibitem[Dwork et~al.(2012)Dwork, Hardt, Pitassi, Reingold, and
  Zemel]{dwork2012fairness}
Cynthia Dwork, Moritz Hardt, Toniann Pitassi, Omer Reingold, and Richard Zemel.
\newblock Fairness through awareness.
\newblock In \emph{Innovations in Theoretical Computer Science Conference},
  2012.

\bibitem[Kamiran and Calders(2012)]{kamiran2012data}
Faisal Kamiran and Toon Calders.
\newblock Data preprocessing techniques for classification without
  discrimination.
\newblock \emph{Knowledge and Information Systems}, 33\penalty0 (1):\penalty0
  1--33, 2012.

\bibitem[Feldman et~al.(2015)Feldman, Friedler, Moeller, Scheidegger, and
  Venkatasubramanian]{feldman2015certifying}
Michael Feldman, Sorelle~A Friedler, John Moeller, Carlos Scheidegger, and
  Suresh Venkatasubramanian.
\newblock Certifying and removing disparate impact.
\newblock In \emph{KDD}, 2015.

\bibitem[Adler et~al.(2016)Adler, Falk, Friedler, Rybeck, Scheidegger, Smith,
  and Venkatasubramanian]{adler2016auditing}
Philip Adler, Casey Falk, Sorelle~A Friedler, Gabriel Rybeck, Carlos
  Scheidegger, Brandon Smith, and Suresh Venkatasubramanian.
\newblock Auditing black-box models by obscuring features.
\newblock \emph{arXiv preprint arXiv:1602.07043}, 2016.

\bibitem[Johndrow and Lum(2017)]{johndrow2017algorithm}
James~E Johndrow and Kristian Lum.
\newblock An algorithm for removing sensitive information: application to
  race-independent recidivism prediction.
\newblock \emph{arXiv preprint arXiv:1703.04957}, 2017.

\bibitem[Zemel et~al.(2013)Zemel, Wu, Swersky, Pitassi, and
  Dwork]{zemel2013learning}
Rich Zemel, Yu~Wu, Kevin Swersky, Toni Pitassi, and Cynthia Dwork.
\newblock Learning fair representations.
\newblock In \emph{ICML}, 2013.

\bibitem[Menon and Williamson(2018)]{pmlr-v81-menon18a}
Aditya Menon and Robert Williamson.
\newblock The cost of fairness in binary classification.
\newblock In \emph{Fairness, Accountability and Transparency}, 2018.

\bibitem[Kohavi(1996)]{uci_income}
Ron Kohavi.
\newblock Scaling up the accuracy of naive-bayes classifiers: a decision-tree
  hybrid.
\newblock In \emph{KDD}, 1996.

\bibitem[Moro et~al.(2014)Moro, Cortez, and Rita]{uci_marketing}
S.~Moro, P.~Cortez, and P.~Rita.
\newblock A data-driven approach to predict the success of bank telemarketing.
\newblock \emph{Decision Support Systems}, 2014.

\bibitem[Yeh and Lien(2009)]{uci_default}
I.~C. Yeh and C.~H. Lien.
\newblock The comparisons of data mining techniques for the predictive accuracy
  of probability of default of credit card clients.
\newblock \emph{Expert Systems with Applications}, 2009.

\bibitem[ibm()]{ibm_data}
{IBM Watson} analytics blog.
\newblock
  \url{https://www.ibm.com/communities/analytics/watson-analytics-blog/}.

\bibitem[Joseph et~al.(2016)Joseph, Kearns, Morgenstern, Neel, and
  Roth]{joseph2016rawlsian}
Matthew Joseph, Michael Kearns, Jamie Morgenstern, Seth Neel, and Aaron Roth.
\newblock Rawlsian fairness for machine learning.
\newblock \emph{arXiv preprint arXiv:1610.09559}, 2016.

\bibitem[Kleinberg et~al.(2016)Kleinberg, Mullainathan, and
  Raghavan]{kleinberg2016inherent}
Jon Kleinberg, Sendhil Mullainathan, and Manish Raghavan.
\newblock Inherent trade-offs in the fair determination of risk scores.
\newblock \emph{arXiv preprint arXiv:1609.05807}, 2016.

\bibitem[Chouldechova(2017)]{chouldechova2017fairlong}
Alexandra Chouldechova.
\newblock Fair prediction with disparate impact: A study of bias in recidivism
  prediction instruments.
\newblock \emph{Big Data}, 2017.

\bibitem[Berk et~al.(2017)Berk, Heidari, Jabbari, Kearns, and
  Roth]{berk2017fairness}
Richard Berk, Hoda Heidari, Shahin Jabbari, Michael Kearns, and Aaron Roth.
\newblock Fairness in criminal justice risk assessments: The state of the art.
\newblock \emph{arXiv preprint arXiv:1703.09207}, 2017.

\bibitem[Zafar et~al.(2017{\natexlab{b}})Zafar, Valera, Rodriguez, Gummadi, and
  Weller]{zafar2017parity}
Muhammad~Bilal Zafar, Isabel Valera, Manuel~Gomez Rodriguez, Krishna~P Gummadi,
  and Adrian Weller.
\newblock From parity to preference-based notions of fairness in
  classification.
\newblock \emph{arXiv preprint arXiv:1707.00010}, 2017{\natexlab{b}}.

\bibitem[Datta et~al.(2017)Datta, Fredrikson, Ko, Mardziel, and
  Sen]{datta2017proxy}
Anupam Datta, Matt Fredrikson, Gihyuk Ko, Piotr Mardziel, and Shayak Sen.
\newblock Proxy non-discrimination in data-driven systems.
\newblock \emph{arXiv preprint arXiv:1707.08120}, 2017.

\bibitem[Nabi and Shpitser(2017)]{nabi2017fair}
Razieh Nabi and Ilya Shpitser.
\newblock Fair inference on outcomes.
\newblock \emph{arXiv preprint arXiv:1705.10378}, 2017.

\bibitem[Kilbertus et~al.(2017)Kilbertus, Rojas-Carulla, Parascandolo, Hardt,
  Janzing, and Sch{\"o}lkopf]{kilbertus2017avoiding}
Niki Kilbertus, Mateo Rojas-Carulla, Giambattista Parascandolo, Moritz Hardt,
  Dominik Janzing, and Bernhard Sch{\"o}lkopf.
\newblock Avoiding discrimination through causal reasoning.
\newblock \emph{arXiv preprint arXiv:1706.02744}, 2017.

\bibitem[Kusner et~al.(2017)Kusner, Loftus, Russell, and
  Silva]{kusner2017counterfactual}
Matt~J Kusner, Joshua~R Loftus, Chris Russell, and Ricardo Silva.
\newblock Counterfactual fairness.
\newblock \emph{arXiv preprint arXiv:1703.06856}, 2017.

\end{thebibliography}

\end{document}